\def\showauthornotes{0}
\def\showtableofcontents{0}
\def\showkeys{0}
\def\showdraftbox{0}
\def\showcolorlinks{1}
\def\usemicrotype{1}
\def\showfixme{1}
\def\writemode{0}
\newtheorem{theorem}{Theorem}[section]
\newtheorem*{theorem*}{Theorem}
\newtheorem*{proposition*}{Proposition}
\newtheorem{lemma}[theorem]{Lemma}
\newtheorem*{lemma*}{Lemma}
\newtheorem*{conjecture*}{Conjecture}
\newtheorem{fact}[theorem]{Fact}
\newtheorem*{fact*}{Fact}
\newtheorem*{hypothesis*}{Hypothesis}
\theoremstyle{definition}
\newtheorem{definition}[theorem]{Definition}
\theoremstyle{remark}
\newtheorem*{claim*}{Claim}
\newtheorem{remark}[theorem]{Remark}
\newtheorem*{remark*}{Remark}
\newtheorem*{observation*}{Observation}
\let\mathbb\varmathbb
\crefname{lemma}{Lemma}{Lemmas}
\crefname{definition}{Definition}{Definitions}
\newcommand{\Sref}[1]{\hyperref[#1]{\S\ref*{#1}}}
\newcommand{\Authornote}[2]{{\sffamily\small\color{red}{[#1: #2]}}}
\newcommand{\Authornotecolored}[3]{{\sffamily\small\color{#1}{[#2: #3]}}}
\newcommand{\Authorcomment}[2]{{\sffamily\small\color{gray}{[#1: #2]}}}
\newcommand{\Authorstartcomment}[1]{\sffamily\small\color{gray}[#1: }
\newcommand{\Authorfnote}[2]{\footnote{\color{red}{#1: #2}}}
\newcommand{\Authorfixme}[1]{\Authornote{#1}{\textbf{??}}}
\newcommand{\Authormarginmark}[1]{\marginpar{\textcolor{red}{\fbox{\Large #1:!}}}}
\newcommand{\Authornote}[2]{}
\newcommand{\Authornotecolored}[3]{}
\newcommand{\Authorcomment}[2]{}
\newcommand{\Authorstartcomment}[1]{}
\newcommand{\Authorfnote}[2]{}
\newcommand{\Authorfixme}[1]{}
\newcommand{\Authormarginmark}[1]{}
\newcommand{\one}[1]{\mathbf{1}\left[#1\right]}
\definecolor{forestgreen(traditional)}{rgb}{0.0, 0.27, 0.13}
\newcommand{\Esymb}{\mathbb{E}}
\newcommand{\Psymb}{\mathbb{P}}
\DeclareMathOperator*{\E}{\Esymb}
\DeclareMathOperator*{\ProbOp}{\Psymb}
\renewcommand{\Pr}{\ProbOp}
\newcommand{\textparen}[1]{\text{(#1)}}
\newcommand{\because}[1]{\textparen{because #1}}
\renewcommand{\because}[1]{\textparen{because #1}}
\newcommand\bdot\bullet
\DeclareMathOperator{\poly}{poly}
\newcommand{\N}{\mathbb N}
\newcommand{\R}{\mathbb R}
\newcommand{\C}{\mathbb C}
\newcommand{\cA}{\mathcal A}
\newcommand{\cD}{\mathcal D}
\newcommand{\cR}{\mathcal R}
\renewcommand{\leq}{\leqslant}
\renewcommand{\geq}{\geqslant}
\newcommand{\draftbox}{\begin{center}
  \fbox{%
    \begin{minipage}{2in}%
      \begin{center}%
          \Large\textsc{Working Draft}\\%
        Please do not distribute%
      \end{center}%
    \end{minipage}%
  }%
\end{center}
\vspace{0.2cm}}
\newcommand{\draftbox}{}
\let\epsilon=\varepsilon
\numberwithin{equation}{section}
\newcommand\MYcurrentlabel{xxx}
\newcommand{\MYstore}[2]{%
  \global\expandafter \def \csname MYMEMORY #1 \endcsname{#2}%
}
\newcommand{\MYload}[1]{%
  \csname MYMEMORY #1 \endcsname%
}
\newcommand{\MYnewlabel}[1]{%
  \renewcommand\MYcurrentlabel{#1}%
  \MYoldlabel{#1}%
}
\newcommand{\MYdummylabel}[1]{}
\newcommand{\torestate}[1]{%
  \let\MYoldlabel\label%
  \let\label\MYnewlabel%
  #1%
  \MYstore{\MYcurrentlabel}{#1}%
  \let\label\MYoldlabel%
}
\newcommand{\restatetheorem}[1]{%
  \let\MYoldlabel\label
  \let\label\MYdummylabel
  \begin{theorem*}[Restatement of \prettyref{#1}]
    \MYload{#1}
  \end{theorem*}
  \let\label\MYoldlabel
}
\newcommand{\restatelemma}[1]{%
  \let\MYoldlabel\label
  \let\label\MYdummylabel
  \begin{lemma*}[Restatement of \prettyref{#1}]
    \MYload{#1}
  \end{lemma*}
  \let\label\MYoldlabel
}
\newcommand{\restateprop}[1]{%
  \let\MYoldlabel\label
  \let\label\MYdummylabel
  \begin{proposition*}[Restatement of \prettyref{#1}]
    \MYload{#1}
  \end{proposition*}
  \let\label\MYoldlabel
}
\newcommand{\restatefact}[1]{%
  \let\MYoldlabel\label
  \let\label\MYdummylabel
  \begin{fact*}[Restatement of \prettyref{#1}]
    \MYload{#1}
  \end{fact*}
  \let\label\MYoldlabel
}
\newcommand{\restate}[1]{%
  \let\MYoldlabel\label
  \let\label\MYdummylabel
  \MYload{#1}
  \let\label\MYoldlabel
}
\newcommand{\addreferencesection}{
  \phantomsection
  \addcontentsline{toc}{section}{References}
}
\let\origparagraph\paragraph
\renewcommand{\paragraph}[1]{\origparagraph{#1.}}
\DeclareUrlCommand\email{}
\DeclareMathOperator{\zo}{\{0,1\}}
\renewcommand{\C}{\mathcal{C}}
\newcommand{\noise}{\mathsf{noise}}
\newcommand{\unclear}{\mathsf{structure}}
\newcommand{\on}{\{\pm 1\}}
\newcommand{\U}{\mathcal{U}}
\newcommand{\cor}{\mathrm{cor}}
\title{Agnostic Learning by Refuting}
\author{
	Pravesh K. Kothari \thanks{Princeton University and IAS \protect{kothari@cs.princeton.edu}}
	\and
	Roi Livni  \thanks{Princeton University \protect{rlivni@cs.princeton.edu}}
}
\begin{document}

\maketitle
 \draftbox
\thispagestyle{empty}

The sample complexity of learning a Boolean-valued function class is precisely characterized by its Rademacher complexity. This has little bearing, however, on the sample complexity of \emph{efficient} agnostic learning. 

We introduce \emph{refutation complexity}, a natural computational analog of Rademacher complexity of a Boolean concept class and show that it exactly characterizes the sample complexity of \emph{efficient} agnostic learning. Informally, refutation complexity of a class $\C$ is the minimum number of  example-label pairs required to efficiently distinguish between the case that the labels correlate with the evaluation of some member of $\C$ (\emph{structure}) and the case where the labels are i.i.d. Rademacher random variables (\emph{noise}). The easy direction of this relationship was implicitly used in the recent framework for improper PAC learning lower bounds of Daniely and co-authors \cite{DBLP:conf/stoc/Daniely16,DBLP:conf/stoc/DanielyLS14,daniely2016complexity} via connections to the hardness of refuting random constraint satisfaction problems. Our work can be seen as making the relationship between agnostic learning and refutation implicit in their work into an explicit equivalence. 
In a recent, independent work, Salil Vadhan \cite{vadhan2017learning} discovered a similar relationship between refutation and PAC-learning in the realizable (i.e. noiseless) case.


\clearpage

\ifnum\showtableofcontents=1
{
\tableofcontents
\thispagestyle{empty}
 }
\fi

\clearpage

\setcounter{page}{1}

\section{Introduction}
Statistical complexity characterizes the information theoretic threshold for the amount of data required for any supervised learning task. However, the amount of data required for \emph{efficient} learning, whenever it is possible, can be significantly different from the statistical complexity.  For example, algorithms based on polynomial regression (\cite{DBLP:journals/siamcomp/KalaiKMS08,DBLP:conf/colt/KaneKM13,Klivans:2008:LGC:1470582.1470603}) guarantee efficient (improper, i.e. return a hypothesis not necessarily in the target class) learning while using data that is a polynomial factor larger than the statistical complexity. There is a systematic effort to study the trade-offs between computational and statistical complexity \cite{DBLP:journals/corr/abs-1304-0828,Chandrasekaran26032013} and a growing body of work has provided explicit examples \cite{DBLP:journals/siamcomp/DecaturGR99,DBLP:conf/nips/DanielyLS13,DBLP:journals/corr/BarakM15} of natural settings where efficient learning provably requires data that is at least a polynomial factor larger than the statistical complexity under some plausible complexity theoretic assumptions. 

In the light of the above work, we focus on obtaining a simple and useful characterization of the sample complexity of \emph{efficient} supervised learning. There's a simple and elegant characterization of the statistical complexity of learning in terms of the Rademacher complexity \footnote{The related notion of VC Dimension of $\C$ characterizes the data required to learn $\C$ over \emph{worst-case} distributions.}. In this note, we give a natural analog of Rademacher complexity that precisely characterizes the amount of data required for \emph{efficient agnostic} (i.e. noisy, see Definition \ref{def:agnostic-learning}) learning.




{}




For a class $\C$ of concepts on $\R^n$, any distribution $\cD$ on $\R^n$, the \emph{Rademacher Complexity} of $\C$, $\cR_m(\C)$ is the following quantity: 
\begin{align}
\mathcal{R}_m(\C) = \E_{\begin{subarray}{c} x_i \sim_{i.i.d.} \cD\\ 1\leq i \leq m \end{subarray}}\left[\E_{\begin{subarray}{c}\sigma_i \sim_{i.i.d.} \{\pm 1\} \\ 1 \leq i \leq m \end{subarray} }\left[\frac{1}{m}\sup_{c\in \C}\sum_{i=1}^m  \sigma_{i}c(x_i)\right]\right].
\end{align}
Classical results \cite{BartlettM02} establish that $\cR_m(\C) = \epsilon$ if and only if there's an algorithm to learn $\C$ over $\cD$ with error at most $\epsilon$ with $\Theta(m)$ samples, thus characterizing the sample-complexity of $\epsilon$-error agnostic learning. 

In this note, we propose a natural computational analog of Rademacher complexity, called as the \emph{Refutation complexity} and show that it exactly determines the sample complexity of efficient agnostic learning. Given random labeled examples $\{(x_i, y_i)\}_{i \leq m}$ where $x_i$s are chosen i.i.d. according to $\cD$, we define the problem of \emph{refutation} as the task of distinguishing between the following two cases:
\begin{enumerate}[(a)]
 \item \textbf{Structure: } $\{(x_i,y_i)\}_{i \leq m}$ are i.i.d. from some distribution $\cD'$ with marginal on $x_i$s being $\cD$ and $\E_{(x,y) \sim \cD'}[c(x)y] = \Omega(1).$ That is, the given example-label pairs come from a distribution that correlates with some $c\in \C$, and 
 \item \textbf{Noise: }$y_i$s are uniform and independent Rademacher random variables. \label{s}
 \end{enumerate}
We define refutation complexity of $\C$ with respect to the distribution $\cD$ at a running time of $T(n)$ as the smallest $m$ for which there's a $T(n)$-time test for distinguishing between structure and noise cases above. 

To motivate this definition, observe that we can interpret the statistical complexity (via the connection to Rademacher complexity outlined above) of $\C$ over $\cD$ as the smallest $m$ for which no concept in $\C$ correlates with purely random noise (the i.i.d. draws from $\{ \pm 1\}.$) Thus, if the Rademacher complexity of $\C$ on $\cD$ with $m$ samples is small enough, then, given random labeled examples $\{ (x_i, y_i)\}_{i \leq m}$, we can (via an inefficient procedure) distinguish between the above two cases by computing the largest correlation of any $c \in \C$ when evaluated at $x_i$s with the $y_i$s. Thus, we can equivalently define statistical complexity as the smallest $m$ for which the above structure vs noise test succeeds. Thus, refutation complexity can be seen as a computational analog of Rademacher complexity. 

The main result of this note is the following theorem:
\begin{theorem}[ Refutation Complexity = Agnostic Learning Complexity, Informal] \label{thm:main-informal}
$\C$ has an efficient agnostic learning algorithm over a distribution $\cD$ with $m$ samples if and only if the refutation complexity of $\C$ at some polynomial running time is at most $O(m).$ 
\end{theorem}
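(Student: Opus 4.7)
The plan is to prove both directions separately: the forward direction (efficient agnostic learning implies efficient refutation) is a short sample-splitting argument, while the reverse direction (refutation implies agnostic learning) requires a distinguisher-to-predictor reduction followed by boosting.

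For the forward direction, I would fix an efficient agnostic learner $A$ for $\C$ and, given labeled samples $S = \{(x_i, y_i)\}_{i=1}^{O(m)}$, partition $S$ uniformly into a training set $S_1$ and test set $S_2$. Run $A$ on $S_1$ to obtain a hypothesis $h$, and output ``structure'' iff the empirical correlation $\frac{1}{|S_2|}\sum_{(x,y)\in S_2} h(x) y$ exceeds an appropriate constant threshold. Under the structure case, the agnostic-learning guarantee ensures $\E_{\cD'}[h(x) y] \ge \max_{c\in\C}\E_{\cD'}[c(x)y] - \epsilon = \Omega(1)$; under the noise case, $h$ depends only on $S_1$ and the labels in $S_2$ are independent Rademacher random variables, so $\E[h(x) y] = 0$ on $S_2$. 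In both cases Hoeffding's inequality on $S_2$ yields concentration, and the composed test runs in polynomial time because $A$ does.

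For the reverse direction, given an efficient refuter $R$ using $m$ samples, I would first reduce to \emph{weak} agnostic learning: produce a (possibly randomized) hypothesis $h$ with $\E_{\cD'}[h(x) y] \ge 1/\poly(n)$ whenever $\max_{c \in \C}\E_{\cD'}[c(x) y] = \Omega(1)$. For this I would use a Yao-style hybrid/distinguisher-to-predictor argument: on input $x^*$, draw $m-1$ auxiliary samples from $\cD'$ and a guess $\tilde y \in \{\pm 1\}$ uniformly, feed the $m$-tuple $(x^*, \tilde y), (x_1, y_1), \ldots, (x_{m-1}, y_{m-1})$ to $R$, and predict $\tilde y$ if $R$ accepts and $-\tilde y$ otherwise. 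A standard hybrid argument that replaces one coordinate at a time from ``structure'' to ``noise'' shows that the expected correlation of this rule with $y^*$ matches the refuter's distinguishing advantage, which is $\Omega(1)$. Fixing (by averaging or concentration) the auxiliary randomness yields a weak agnostic learner. I would then compose with a boosting procedure suited to the agnostic setting---for example, smooth boosting in the style of Servedio or the agnostic boosting framework of Kalai--Kanade--Mansour---to amplify the weak correlation advantage into a strong agnostic hypothesis.

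The main obstacle is composing with boosting while preserving the $O(m)$ sample complexity and polynomial running time. Each boosting round invokes the weak learner on a reweighted sample, and to apply the refuter we must effectively produce samples from the reweighted distribution via rejection sampling; controlling the resulting overhead forces the use of a \emph{smooth} boosting variant whose multiplicative weights stay bounded. A second subtlety is that the informal theorem hides the $\epsilon$ dependence---the structure/noise test is stated at a constant correlation threshold $\Omega(1)$, whereas agnostic learning to error $\epsilon$ naturally corresponds to detecting correlation $\epsilon$, so the quantitative correspondence likely identifies $\epsilon$-accurate agnostic learning with refutation at threshold $\epsilon$. Finally, some care is needed in the reverse direction to ensure that the hybrid argument is carried out against samples drawn from the \emph{correct} product distribution so that the extracted predictor actually generalizes to fresh examples from $\cD'$ rather than merely fitting the $m$-tuple handed to $R$.
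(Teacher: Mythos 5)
Your proposal matches the paper's route in both directions: the forward direction is exactly the sample-splitting-and-threshold test, and the reverse direction is a prefix-noise hybrid argument producing a weak agnostic learner, followed by agnostic boosting. The paper implements your ``Yao-style'' sketch concretely by defining $m+2$ hybrid functions $W_{i,b}(x)$ (replace the first $i-1$ coordinates by $\cD\times\U_1$ samples, put $(x,b)$ in position $i$, fill the rest with $\cD'$ samples), setting candidate hypotheses $h_i(x) = W_{i,+1}(x) - W_{i,-1}(x)$, and using the telescoping gap $\E[W_{0,b}] - \E[W_{m+1,b}] \geq 1/3$ to argue that some $h_i$ achieves correlation at least $2/(3m)$; it then empirically selects the best $h_i$ on a small held-out sample.

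The one place your plan would hit friction is the boosting step, and the obstacle you flag is actually a red herring. You worry about producing samples from reweighted example distributions via rejection sampling and about keeping multiplicative weights bounded via smooth boosting, but this is exactly what the Kalai--Kanade style of \emph{agnostic} boosting (which you mention but do not commit to) is designed to avoid: it reweights only the conditional distribution of labels given $x$, leaving the marginal over examples equal to $\cD$ in every round. Since the refuter---and hence the derived weak learner---is specific to $\cD$, this label-only reweighting is not an optimization but a necessity; if boosting changed the example marginal, the refuter would no longer apply, and the equivalence would cease to be distribution-specific (this is precisely the point of contrast with Vadhan's realizable-case analogue, where PAC boosting does change the example distribution and the characterization is correspondingly distribution-independent). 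Relatedly, your concern about ``preserving the $O(m)$ sample complexity'' through boosting is misplaced: the paper's reverse direction incurs a polynomial blowup to $O(m^3/\epsilon^2)$ samples, and the informal ``$O(m)$'' in the theorem statement is deliberately loose, suppressing $\poly$ factors in $m$ and $1/\epsilon$.
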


\subsection{Comparison with \cite{vadhan2017learning}} 

In a recent, independent work, Vadhan \cite{vadhan2017learning} used similar arguments to establish a similar equivalence to Theorem \ref{thm:main-informal} between \emph{distribution independent PAC learning} in the realizable case (i.e. when the labels perfectly correlate with some concept in the target class) and a slightly different notion of refutation. In this notion, the refutation algorithm is required to distinguish the case that the sample that realizable (i.e., the labels agree with some concept from the class) from the case that the labels in the sample are i.i.d. Rademacher random variables.

Since agnostic learning is provably different from realizable PAC learning in general, the notions of refutation that characterize the  complexity of learning in the two models have to be necessarily different. Another interesting point of difference is that our equivalence is \emph{distribution-specific} and thus slightly more fine-grained in that it allows relating learnability on a given distribution to refutation on the same distribution. In contrast, Vadhan's characterization holds for distribution independent PAC learning. This difference arises entirely due to the the difference in the black-box boosting algorithms one can use in PAC vs agnostic settings\footnote{We thank Salil Vadhan for pointing this out to us.}: in the PAC learning case, the boosting algorithms modify the distribution of examples over the course of the execution and thus the characterization holds only in a distribution independent setting. In the agnostic setting, there are distribution specific boosting algorithms (such as that of \cite{DBLP:conf/nips/KalaiK09,feldman2010distribution}) that work by changing only the distributions of the labels while keeping the distribution of the example points unchanged. It is an interesting direction to investigate notions of refutation that allow \emph{distribution-specific} characterization of PAC learning in realizable case.

It's interesting to note how slight changes to in the formulation of the refutation problem changes the model of learning that it characterizes.


\subsection{Discussion}
\paragraph{Proper vs Improper Learning and the Framework of \cite{daniely2014average}} The agnostic learning algorithm we obtain using a refutation algorithm is \emph{improper} - that is, it doesn't necessarily produce a hypothesis from the class $\C$. This is not accidental - it's well known that the flexibility of \emph{improper} learning allows circumventing computational hardness results that afflict \emph{proper} learning. A simple example is the class of 3-term DNF formulas in $n$ variables: unless RP = NP, there's no polynomial time \emph{proper} learning algorithm for this class \cite{DBLP:journals/jacm/PittV88}, however, there's a simple $\poly(n,1/\epsilon)$-time \emph{improper} learning algorithm for it (for a discussion see, \cite{DBLP:journals/jmlr/Shalev-ShwartzST12}). On the flip side, the power of \emph{improper} learning makes the task of proving lower bound against such algorithms harder. The equivalence between refutation and agnostic learning holds for all (and thus, also improper) learning algorithms and thus can serve as a useful handle in understanding the complexity of improper learning.

Indeed this connection and in particular, the implication that learning implies refutation is implicit in the influential work of Daniely and co-authors \cite{DBLP:conf/stoc/Daniely16,daniely2016complexity,DBLP:conf/stoc/DanielyLS14} who showed (in the language of this paper) that a refutation algorithm for the concept classes of halfspaces and DNF formulas can be used to refute certain random constraint satisfaction problems \cite{MR3473335-Allen15,MR2121179-Feige02}. These works used such a reduction along with standard hardness assumptions for refuting random CSPs to obtain the first hardness results for improper PAC learning for the above classes. 

Our equivalence establishes the converse of the connection in these works and makes the connection between refutation and agnostic learning explicit. While a priori, it might appear that refutation (which asks for distinguishing between a pure noise in the labels from a correlated set of labels) is easier than agnostic learning, this work shows that any lower bound on (improper) learning has to necessarily be a lower bound for an associated refutation problem. Thus, to an extent, it shows that the above framework for improper agnostic learning lower bounds is essentially complete.




\paragraph{Connections to Boosting/Property Testing} It is also illuminating to view the equivalence we show as saying that an oracle for refutation is sufficient for agnostic learning. This naturally leads to the question of what kind of oracle access to $\C$ is sufficient for (agnostic) learning. We discuss two natural oracles here: a weak-learning oracle and a property-testing oracle.

Known boosting (see \cite{freund1995desicion, schapire90}, \cite{DBLP:conf/nips/KalaiK09, feldman2010distribution}) algorithms imply that a \emph{weak-learning oracle} is sufficient for agnostically learning of $\C$. A weak learning oracle takes random example-label pairs and returns a hypothesis whose correlation with the labels from the input distribution is at least an inverse polynomial fraction of the correlation of the best-fitting hypothesis from $\C$. In learning literature, this is sometimes referred to as a \emph{weak-optimization} oracle for $\C$ - in that, it gives a inverse polynomial (potentially improperly) approximation to the correlation of the best fitting hypothesis from $\C$. It is not hard to see that such an oracle is enough to solve the refutation problem and thus is a potentially stronger access to $\C$ than the refutation algorithm. 

Our result implies that an much weaker algorithm is enough to get an agnostic learning algorithm - the refutation oracle doesn't return any hypothesis, it ``merely'' distinguishes between the case that the labels are completely random and independent of the examples from the case that the labels come from some distribution that correlates with some concept in $\C$. 

It is also instructive to compare a refutation oracle (or a ``structure`` vs ''noise'' tester) for $\C$ with a ``property-tester'' for $\C$. An $\alpha$-approximate property-testing algorithm for $\C$ uses random example-label pairs \footnote{Property testers are usually defined with $\alpha = 1$ and are in general also allowed to use membership queries. We use a definition that is similar in spirit but is more relevant for the comparison here.} from some distribution and accepts if the labels achieve a correlation of at least $\alpha$ with $\C$ and rejects if tevery $c \in \C$ has a correlation of at most $\alpha-\epsilon$ with the labels. We can interpret a property tester, thus, as a variant of the refutation oracle that must treat a distribution on example-label pairs that has a correlation of at most $\alpha-\epsilon$ with every $c \in \C$ as ``noise.'' In particular, the notion of what is ``unstructured/noise'' for a property tester is more stringent compared to a refutation algorithm. Indeed, this is not surprising: while testing is known to be no harder than \emph{proper} learning, it can be harder than \emph{improper} learning for some concept classes, once again illustrating the difference between proper and improper learning \cite{DBLP:journals/jacm/GoldreichGR98}. 

\paragraph{Using Refutation to get Learning Algorithms} It will be extremely interesting to understand if the equivalence between refutation and learning allows an application in the direction opposite to the one employed in the work of Daniely and co-authors and get new algorithms for agnostic learning. This is perhaps not too optimistic. The works of Daniely and co-authors establish a natural connection between the refutation problem for a concept class and refuting random CSPs. There are known algorithms for refuting random CSPs (see for e.g. \cite{DBLP:journals/corr/RaghavendraRS16,DBLP:conf/focs/Feige07,MR3473335-Allen15}) that use techniques that appear different from the usual tool-kit in agnostic learning (for e.g. the use of semi-definite programming) that might prove useful in obtaining new agnostic learning algorithms by building the required refutation algorithms. 
\subsection{Proof Overview}
It is easy to see that efficient learning implies efficient refutation. For the other direction, we give an explicit, efficient algorithm that invokes the refutation algorithm a small number of times to get an agnostic learner for the class $\C$. This algorithm works in two steps - in the first step, it uses a refutation algorithm to come up with a \emph{weak-agnostic} learner: i.e. a hypothesis that achieves a correlation with the labels that is some tiny fraction of the correlation of the best hypothesis from $\C$. In the second step, it combined an off-the-shelf boosting algorithm with the weak learner above to get an agnostic learner with small error. 

The key idea in the transformation of a refutation algorithm into a weak-learner is to view the black-box refutation algorithm as a ``code'' for computing a function by manipulating the example-label pairs that it takes as input. A simple hybrid argument then shows that there's a small list of hypotheses generated by manipulating the inputs to the refutation algorithm that contains a good weak learner. We can find the best weak learner from the list by evaluating the error of each of the hypotheses in the list over a fresh batch of samples from the underlying distribution.
\subsection{Preliminaries}
We use $\U_m$ to denote the uniform distribution over $\on^m$ for any $m \in \N.$ 
We define agnostic learning here.
\begin{definition}[Agnostic Learning with respect to a distribution $\cD$] \label{def:agnostic-learning}
Let $\C$ be a class of Boolean concepts $\C \subseteq \{ f: \on^n \rightarrow \on \}$. $\C$ is said to be $\epsilon$-agnostically learnable in time $T(n,1/\epsilon)$ and samples $S(n,1/\epsilon)$ if there's an algorithm $\cA$ running in time $T(n,1/\epsilon)$ that takes $S(n,1/\epsilon)$ random labeled examples $\{ (x_i,y_i) \mid 1 \leq i \leq m\}$ where $(x_i,y_i)$s are i.i.d. from $\cD'$, such that the marginal on $x_i$ is $\cD$ and outputs with probability at least $3/4$, a hypothesis $h: \on^n \rightarrow \on$ such that $\E_{(x,y) \sim \cD'}\left[ \one{h(x)\ne y}\right] \leq \inf_{c \in \C} \E_{(x,y) \sim \cD'}\left[\one{c(x)\ne y}\right] + \epsilon.$ 
\end{definition}




\section{Refutation Complexity}
In this section, we define refutation complexity of a class of hypothesis with respect to a distribution $\cD$.

\begin{definition}[Refutation Algorithm for Distribution $\cD$]
Let $\C \subseteq \{f: \R^n \rightarrow \on\}$ be a class of Boolean concepts. Let $\cD$ be a distribution on $\R^n.$

A $\delta$-\emph{refutation} algorithm $\cA$ for $\C$ on $\cD$ with $m = m(n)$ samples is a (possibly randomized) algorithm that takes input an $m$-tuple of points $\{ x_1, x_2, \ldots, x_m \} \subseteq \on^n$ and an $m$-tuple of labels $(\sigma_1, \sigma_2, \ldots, \sigma_m) \in \on^m$ and outputs either $\noise$ or $\unclear$ with the following guarantees:
\begin{enumerate}
\item \textbf{Completeness: } If $\{(x_i, \sigma_i)\}_{i \leq m}$ are i.i.d. from a distribution $\cD'$ on $\R^n \otimes \{\pm 1\}$ such that the marginal on $\R^n$ equals $\cD$ and $ \sup_{c \in \C} \E_{(x,\sigma) \sim \cD'}[c(x) \sigma] \geq \delta$, then, 
\[
\Pr_{\begin{subarray}{c}
\{(x_i,y_i)\}_{i \leq m} \sim_{i.i.d.} \cD'\\
 \text{ internal randomness of } \cA \end{subarray}} [ \text{ output = } \unclear] \geq 2/3.
\]
\item \textbf{ Soundness: } 
\[
\Pr_{\begin{subarray}{c}
(\sigma_1, \sigma_2, \ldots, \sigma_m) \sim \U_m\\
x_1,x_2, \ldots, x_m \sim \cD\\
 \text{ internal randomness of } \cA \end{subarray}} \Pr_{ } [ \text{ output = } \noise] \geq 2/3.
 \] 
\end{enumerate}
\end{definition}

\begin{definition}[$\delta$-Refutation Complexity]
Let $\C \subseteq \{f: \on^n \rightarrow \on\}$ be a class of Boolean concepts. 
Let $\cD$ be a distribution on $\on^n.$

The $\delta$-\emph{refutation complexity} of $\C$ on a distribution $\cD$ with running time $T(n)$ denoted by $\cR_{T(n),\delta}(\C)$, is the smallest $m = m(n,\delta)$ such that there exists a $\delta$-refutation algorithm for $\C$ on $\cD$ running in time $T(n)$ and $m$-samples. When $T(n)$ is not stated explicitly, we assume $T(n) = \poly(n)$ for some fixed polynomial in $n$. 
\end{definition}
\begin{remark}
Observe that the refutation complexity, just as Rademacher complexity is distribution dependent. Further, for $T(n) = \infty$, $\delta$-refutation complexity degenerates into Rademacher complexity. At non-trivially bounded running times (of special interest, of course, is polynomial time algorithms), refutation complexity captures the sample complexity of \emph{efficient} agnostic, improper learning $\C$ over $\cD$ as we show next and thus can be much larger than the Rademacher complexity.
\end{remark}

\section{Learning vs Refutation Complexity}

In this section, we establish the equivalence between agnostic learning a class $\C$ over a given distribution $\cD$ and the refutation problem with  respect to the distribution $\cD$ for the concept class $\C$.

We begin by showing the Learning implies Refutation, which is the easy direction.

\begin{lemma}[Learning implies Refutation]
Suppose $\C$ is $\epsilon$-agnostically learnable in time $T(n,\epsilon)$ and samples $S(n,\epsilon)$ over the distribution $\cD$.
Then, the refutation complexity of $\C$ with respect to the distribution $\cD$ at the running time $T(n,\delta/4)$ is at most $2S(n,\delta/4) + 128/\delta^2.$
\end{lemma}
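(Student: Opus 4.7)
The plan is to use $\cA$ in a standard train-then-test reduction. Given a batch of $m = 2S(n,\delta/4) + 128/\delta^2$ labeled examples, the refutation algorithm $\cR$ splits them into a training set $T$ of size $S(n,\delta/4)$ and a held-out set $H$ of size $S(n,\delta/4) + 128/\delta^2$. It runs $\cA$ with target accuracy $\delta/4$ on $T$ to obtain a hypothesis $h$, computes the empirical correlation
\[
\hat\rho = \frac{1}{|H|}\sum_{(x,y) \in H} h(x)\,y,
\]
and outputs $\unclear$ if $\hat\rho > \delta/4$ and $\noise$ otherwise.

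For completeness, suppose the examples are i.i.d.\ from $\cD'$ with $\sup_{c \in \C}\E_{\cD'}[c(x)y] \geq \delta$. Using the identity $\err_{\cD'}(c) = \tfrac{1}{2}(1 - \E_{\cD'}[c(x)y])$, we get $\opt := \inf_{c \in \C}\err_{\cD'}(c) \leq (1-\delta)/2$. The agnostic guarantee on $\cA$ then produces, with probability at least $3/4$, a hypothesis $h$ with $\err_{\cD'}(h) \leq \opt + \delta/4 \leq 1/2 - \delta/4$, equivalently $\E_{\cD'}[h(x)\,y] \geq \delta/2$. Conditioning on $h$, the variables $\{h(x_i)y_i : (x_i,y_i) \in H\}$ are i.i.d.\ and bounded in $[-1,1]$, so Hoeffding's inequality gives $|\hat\rho - \E_{\cD'}[h(x)y]| \leq \delta/4$ with probability at least $1 - 2e^{-4} > 0.96$. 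A union bound over these two events shows $\hat\rho > \delta/4$, so $\cR$ outputs $\unclear$, with probability $> 2/3$.

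For soundness, when the labels $y_1, \ldots, y_m$ are drawn from $\U_m$ independently of $x_1, \ldots, x_m$, the hypothesis $h$ depends only on $T$ and is thus independent of the held-out pairs in $H$. Conditioned on $h$, each $h(x)\,y$ in $H$ has mean $0$ since $\E[y \mid x] = 0$. Hoeffding then yields $|\hat\rho| \leq \delta/4$ with probability at least $1 - 2e^{-4} > 2/3$, so $\cR$ correctly outputs $\noise$.

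The overall running time is the cost of one call to $\cA$ with accuracy $\delta/4$ plus a linear-time empirical average over $H$, which matches the claimed $T(n,\delta/4)$ up to lower-order additive overhead. The proof is essentially routine; no real obstacle arises, and the only subtlety to preserve is the independence of $h$ from the held-out sample, which is built into the sample split, together with the translation between the correlation formulation of refutation ($\E[c(x)y] \geq \delta$) and the $\{0,1\}$-loss formulation of agnostic learning via $\E[c(x)y] = 1 - 2\err_{\cD'}(c)$.
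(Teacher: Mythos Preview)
Your proof is correct and follows essentially the same train-then-test reduction as the paper: run the agnostic learner with accuracy $\delta/4$ on one portion of the sample, estimate the correlation of the returned hypothesis on a held-out portion, and threshold. The only cosmetic differences are that the paper splits the $2m$ samples into two equal halves and uses a threshold of $\delta/2$ rather than $\delta/4$; your version is slightly more careful in invoking Hoeffding and doing the union bound explicitly, but the argument is the same.
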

\begin{proof}
Let $m = S(n,\delta/4) + 64/\delta^2.$ 

The $\delta$-refutation algorithm gets input $x_1, x_2, \ldots, x_{2m}$ and $\sigma_1, \sigma_2, \ldots, \sigma_{2m}.$ It runs the $\epsilon$-agnostic learner on examples $\{(x_i, \sigma_i)\}_{i =1}^m$ for $\epsilon = \delta/4$ and obtains a hypothesis $h$. Let $\cor_h = \frac{1}{m}\sum_{i =m+1}^{2m} \sigma_i \cdot h(x_i).$ If $\cor_h \geq \delta/2$, output $\unclear$ otherwise output $\noise.$

We now analyze the completeness and the soundness properties of this algorithm. 

First, suppose $\{(x_i, \sigma_i)\}_{i \leq 2m}$ were i.i.d. according to some $\cD'$ such that the marginal on $\R^n$ equals $\cD$. Let $\cor_f(\cD') = \E_{(x,y) \sim \cD'}[ f(x) y].$ Then, with probability $2/3$ over the draw of the sample, the agnostic learner produces a hypothesis $h$ such that $\cor_h \geq \cor_h(\cD') - \epsilon \geq \cor_c(\cD')-2\epsilon$ for every $c \in \C$. Thus, if $\cor_c(\cD') \geq \delta$, then, $\cor_h \geq \delta - \epsilon/2 \geq \delta/2.$ Thus, in this case, the algorithm above outputs $\unclear$ as desired.

Now suppose $\sigma_i$s are i.i.d. Rademacher and independent of $x_i$s. Then, since $\sigma_{m+1}, \ldots, \sim_{2m}$ are independent of $\sigma_1, \ldots, \sigma_m$, $\cor_h \leq \frac{4}{\sqrt{m}} < \delta/2$ using that $m  > 64/\delta^2.$




\end{proof}

We now prove the other direction:


\begin{lemma}[Learning by Refutation] \label{lem:refutation-to-learning}
Suppose that the $\delta$-refutation complexity of a class of Boolean concepts $\C$ with respect to a distribution $\cD$ at a running time $T(n)$ is $m = \cR_{T(n),\delta}(\C).$ Then, there's an algorithm that runs in time $T(n) \frac{m^2}{\epsilon^2}$ and uses $ O(\frac{m^3}{\epsilon^2})$ samples to $(\delta+\epsilon)$-agnostically learn $\C$ on $\cD$.
\end{lemma}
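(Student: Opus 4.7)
The plan is a two-stage reduction: use $\cA$ to build a weak agnostic learner via a hybrid argument, and then feed this weak learner into an off-the-shelf distribution-specific agnostic boosting procedure.

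\paragraph{Weak learner via a hybrid argument} Fix any target distribution $\cD'$ on $\on^n \times \on$ whose $x$-marginal equals $\cD$, and write $\tau^\star = \sup_{c \in \C} \E_{(x,y) \sim \cD'}[c(x)\,y]$. For $0 \le k \le m$ let $H_k$ be the hybrid distribution on $(\on^n \times \on)^m$ whose first $k$ coordinates are i.i.d.\ draws from $\cD'$ and whose last $m-k$ coordinates are i.i.d.\ draws from $\cD \otimes \U_1$, and set $p_k = \Pr[\cA \text{ outputs } \unclear \text{ on } H_k]$. When $\tau^\star \ge \delta$, completeness and soundness of $\cA$ give $p_m \ge 2/3$ and $p_0 \le 1/3$, so some $k^\star \in [m]$ satisfies $p_{k^\star} - p_{k^\star - 1} \ge 1/(3m)$. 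For each $k$ I define a randomized hypothesis $h_k$ whose evaluation at a query $x$ draws fresh ``context'' samples $(x_j,\sigma_j)_{j<k}\sim\cD'$ and $(x_j,\sigma_j)_{j>k}\sim \cD\otimes\U_1$, queries $\cA$ with $x$ inserted at coordinate $k$ under each choice $\sigma_k\in\on$ to get acceptance probabilities $G(x,\pm 1)$, and outputs a random $\pm 1$ bit with mean $G(x,+1)-G(x,-1)$. Decomposing any function of a bit $\sigma \in \on$ into its even and odd parts and conditioning on the other $m-1$ coordinates shows
\[
p_k - p_{k-1} \;=\; \tfrac12\,\E_{(x,\sigma)\sim\cD'}\bigl[\sigma\,(G(x,+1)-G(x,-1))\bigr] \;=\; \tfrac12\,\E\bigl[\sigma\,h_k(x)\bigr]\mper
\]
Hence $h_{k^\star}$ has correlation at least $2/(3m)$ with the labels.

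\paragraph{Selecting the right $k$ and boosting} Because $k^\star$ is unknown, I construct all $m$ candidates $h_1,\dots,h_m$ and estimate each of their empirical correlations on a fresh validation sample of size $\tilde O(m^2)$; Hoeffding plus a union bound let me pick $\hat h$ with true correlation $\gamma=\Omega(1/m)$. This is precisely a distribution-specific weak agnostic learner: whenever $\tau^\star\ge\delta$, one call returns a hypothesis of correlation $\gamma$, consuming time $T(n)$ and $O(m)$ samples from $\cD'$. Feeding $\hat h$ into a distribution-specific agnostic booster such as that of Kalai--Kanade \cite{DBLP:conf/nips/KalaiK09} or Feldman \cite{feldman2010distribution} --- whose reweightings act only on the conditional distribution of the label and therefore preserve the $x$-marginal $\cD$ --- gives, in $O(1/(\gamma^2\epsilon^2))=O(m^2/\epsilon^2)$ rounds, a hypothesis with disagreement probability at most $\OPT+\epsilon$, for a total of $O(m^3/\epsilon^2)$ samples and $T(n)\cdot O(m^2/\epsilon^2)$ time.

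\paragraph{The $\tau^\star<\delta$ regime, and the main obstacle} If $\tau^\star<\delta$ the refutation guarantee is vacuous and the pipeline above may return garbage, but then $\OPT>(1-\delta)/2$ and the trivial constant hypothesis $h_{\mathrm{triv}}(x)=\sign(\E_{\cD'}[y])$ has error at most $1/2\le \OPT+\delta/2$. I therefore run both candidates, estimate their errors on $\tilde O(1/\epsilon^2)$ fresh samples, and output the better one; the resulting error is at most $\OPT+\delta+\epsilon$, as required. The conceptually delicate step is the coordinate-gap-to-correlation identity $p_k-p_{k-1}=\tfrac12\E[\sigma\,h_k(x)]$: this is where a mere change in acceptance probability of $\cA$ is converted into a concrete predictive signal. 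The remaining work --- bounding how many fresh $\cD'$-samples each evaluation of $h_k$ consumes during the boosting iterations without reusing samples in a way that spoils independence --- is routine bookkeeping, and is what drives the final $O(m^3/\epsilon^2)$ sample bound.
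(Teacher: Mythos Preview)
Your proposal is correct and follows essentially the same route as the paper: a hybrid argument over the $m$ input slots of the refutation algorithm, the even/odd decomposition identity $p_k-p_{k-1}=\tfrac12\E[\sigma\,h_k(x)]$ to extract a correlation-$\Omega(1/m)$ hypothesis, validation to select the right index, and then distribution-specific agnostic boosting \`a la Kalai--Kanade/Feldman. The only cosmetic difference is that the paper absorbs the $\tau^\star<\delta$ case into the definition of a $(\gamma,\alpha)$-weak learner with $\alpha=\gamma\delta$ (so boosting directly yields error $\le \OPT+\delta+\epsilon$), whereas you handle it by a final comparison with the trivial predictor; both are fine.
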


The proof is in two steps. In the first step, we show that the refutation algorithm yields a weak agnostic learner for $\C$ with respect to the distribution $\cD$. In the second step, we use the distribution specific agnostic boosting algorithm (see \cite{DBLP:conf/nips/KalaiK09}) to boost the accuracy of the weak learner to obtain an agnostic learner. We start by defining a weak-agnostic learner :

\begin{definition}[Weak Agnostic Learner]
An $(\gamma,\alpha)$-weak agnostic learner for a Boolean concept class $\C$ over a distribution $\cD$ is an algorithm that takes input random examples from a distribution $\cD'$ on example-label pairs $(x,y)$ such that the marginal on $x$ is $\cD$ such that with probability at least $3/4$ over its random input outputs a (randomized) hypothesis $h:\on^n \rightarrow \on$ such that $\E_{(x,y) \sim \cD'}[ y \cdot h(x) ]\geq \gamma (\sup_{c \in \C} \E_{(x,y) \sim \cD'}[y \cdot c(x)]) - \alpha.$
\end{definition}

\begin{lemma}[Refutation to Weak Agnostic Learner]
\label{lem:reftolearn}
Suppose that the $\delta$-refutation complexity of a class of Boolean concepts $\C$ with respect to a distribution $\cD$ at a running time $T(n)$ is $m = \cR_{T(n),\delta}(\C).$ Then, there's an $(\gamma,\alpha)$-weak agnostic learner for $\C$ on distribution $\cD$ that runs in time $T(n)$ and samples $m(n)$ where $\alpha = \delta \cdot \gamma$, $\gamma = \frac{2}{3m}.$
\end{lemma}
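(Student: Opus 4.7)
The plan is to convert the refutation algorithm $\cA$ into a weak agnostic learner through a hybrid argument applied to the samples given to $\cA$. For $i = 0, 1, \ldots, m$, let $\cD'_i$ denote the distribution on $m$-tuples of labeled examples in which the first $i$ pairs are drawn i.i.d.\ from $\cD'$ while the remaining $m - i$ pairs have $x$-coordinate $\sim \cD$ and label drawn independently and uniformly from $\{\pm 1\}$; so $\cD'_0$ is the pure-noise distribution and $\cD'_m = (\cD')^{m}$. Define $p_i := \Pr[\cA(\cD'_i) = \unclear]$. Soundness gives $p_0 \le 1/3$, and under $\tau := \sup_{c \in \C} \E_{\cD'}[y c(x)] \ge \delta$ (the only case that needs nontrivial work) completeness gives $p_m \ge 2/3$. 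Telescoping yields $\sum_{i=1}^{m}(p_i - p_{i-1}) = p_m - p_0 \ge 1/3$.

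Next I would decode a randomized hypothesis $h_i$ from each hybrid position. Fix auxiliary randomness $R_i$ consisting of $(X_j, Y_j)_{j < i}$ drawn i.i.d.\ from $\cD'$, pairs $(X_j, Y'_j)_{j > i}$ with $X_j$ reused from further $\cD'$-samples (so $X_j \sim \cD$ marginally) and $Y'_j$ fresh uniform, plus internal coins $r$ for $\cA$. On input $x$, let $\phi_+(x), \phi_-(x) \in \{0,1\}$ be the indicators that $\cA$ outputs $\unclear$ when run with coins $r$ on the sample obtained by placing $(x,+1)$, respectively $(x,-1)$, into slot $i$ and using the pairs from $R_i$ in the other slots. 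Set $g(x) := (\phi_+(x) - \phi_-(x))/2 \in \{-\tfrac12, 0, \tfrac12\}$ and let $h_i(x) = +1$ with probability $\tfrac12 + g(x)$ and $-1$ otherwise. The one-bit Fourier identity $\phi(y) = \tfrac{\phi_+ + \phi_-}{2} + y \cdot g(x)$, together with the observation that the constructed sample is distributed exactly as $\cD'_{i}$ when $(x, y) \sim \cD'$ occupies slot $i$ and exactly as $\cD'_{i-1}$ when $(x, y) \sim \cD \otimes \U$ does, yields the key identity $\E_{R_i,(x,y)\sim\cD'}[y \cdot h_i(x)] = 2(p_i - p_{i-1})$.

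The full weak learner samples $i$ uniformly from $[m]$, uses its $m$ input samples to manufacture $R_i$ (reusing the $x$-coordinates for the noise suffix together with freshly drawn uniform labels), and outputs the resulting $h_i$. Absorbing the choice of $i$, the auxiliary randomness, and the $\pm 1$ coin into the randomized hypothesis gives a single deterministic guarantee $\E_{(x,y) \sim \cD'}[y \cdot h(x)] = \tfrac{2}{m}(p_m - p_0) \ge \tfrac{2}{3m} = \gamma$; since $\tau - \delta \le 1$ this is at least $\gamma \tau - \alpha = \gamma(\tau - \delta)$, matching the definition of a weak learner with $\alpha = \delta \gamma$. The ``probability $\ge 3/4$'' clause is automatic, the sample budget is $m(n)$, and each evaluation of $h$ makes two black-box calls to $\cA$ and hence runs in $O(T(n))$ time.

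The main obstacle is the hybrid bookkeeping: correctly aligning the slot-$i$ labels $\pm 1$ with the hybrids $\cD'_i$ and $\cD'_{i-1}$ so that the identity $\E[y h_i(x)] = 2(p_i - p_{i-1})$ drops out cleanly, and in particular verifying that the ``noise'' slots of $R_i$ can be simulated using only samples from $\cD'$ by discarding their labels and redrawing fresh uniform ones. The remaining ingredients -- telescoping, the one-bit Fourier expansion, and the averaging over $i$ -- are routine once the hybrids are set up.
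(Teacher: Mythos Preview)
Your proof follows essentially the same route as the paper's: a hybrid argument over the $m$ sample slots combined with the one-bit identity $\phi_y(x) = \tfrac{\phi_+(x) + \phi_-(x)}{2} + y\cdot\tfrac{\phi_+(x) - \phi_-(x)}{2}$ to extract a hypothesis from each hybrid gap. The paper's $W_{i,b}$ functions are exactly your $\phi_{\pm}$, and its computation \eqref{eq:whatever} is your ``one-bit Fourier identity.'' The only real difference is the last step: the paper argues that \emph{some} index $i$ must satisfy $\E[y\,h_i(x)] \ge \tfrac{2}{3m}$ and then uses a small validation sample to pick that $i$, whereas you average uniformly over $i$ and fold the choice of $i$ into the randomized hypothesis so that the telescoped bound $\tfrac{2}{m}(p_m-p_0)$ applies directly. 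Both variants are standard and deliver the same $(\gamma,\alpha)$ parameters. Your hybrids also run in the opposite direction ($\cD'$-prefix, noise-suffix versus the paper's noise-prefix, $\cD'$-suffix), which is immaterial.

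One small point worth tightening: once you commit to ``uses its $m$ input samples to manufacture $R_i$,'' the output hypothesis depends on those samples, so its correlation is a random variable with \emph{expectation} $\ge \tfrac{2}{3m}$; the ``probability $\ge 3/4$'' clause is then not literally automatic. The paper's $W_{i,b}$ sidesteps this by having each evaluation draw fresh auxiliary samples (so each $h_i$'s correlation is a fixed number and selection makes sense), and your ``absorb the auxiliary randomness into the hypothesis'' phrasing does the same---but that means the hypothesis needs sample access to $\cD'$ at evaluation time rather than a one-shot batch of $m$ stored samples. Both write-ups are equally informal on this point, and it does not affect the downstream boosting.
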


We describe a natural class of candidates for a weak learner that come out of running the refutation algorithm on appropriately chosen hybrids of the distribution $\cD'$ and $\cD \times \U_1.$ We begin by defining a class of $2(m+2)$ different functions denoted by $W_{i,b}:\on^n \rightarrow \zo$ for $0 \leq i \leq m+1$ and $b \in \pm 1$ produced by taking these hybrids. Our weak learners will be a simple transformation of this class.

\newcommand{\algnametwo}{3.1 \xspace}
\begin{center}
\fbox{\begin{minipage}{6in} 
\begin{center}
\textbf{\algnametwo Hybrid Functions $W_{i,b}$ } 
\end{center}

\begin{description}
\item[Input:] $x \in \R^n$, $b \in \on$.
\item[Output:] $W_{i,b}(x) = z \in \on.$ 
\item[Operation:] ~
\begin{enumerate}
\item Draw $(x_{1}, \sigma_{1}), \ldots, (x_{i-1},\sigma_{i-1})$ i.i.d. from $\cD \times \U_1$.  Draw $(x_{i+1}, y_{i+1}), (x_{i+2}, y_{i+2}) \ldots, (x_{m}, y_{m})$ i.i.d. from $\cD'$. 
\item Run the $\delta$-refutation algorithm on input\\ $(x_1,\sigma_1), (x_2, \sigma_2), \ldots, (x_{i-1},\sigma_{i-1}), (x,b), (x_{i+1},y_{i+1}), \ldots, (x_{m}, y_m)$.
\item Let $W_{i,b} = 1$ if the refutation algorithm returns $\unclear$ and $0$ otherwise.
\end{enumerate}
\end{description}
\end{minipage}}
\end{center}

We make some simple observations about $W_{i,b}$ that will come handy in the argument below.

Observe that $W_{m+1,b}$ is the function that evaluates to 1 if the output of the refutation algorithm on examples drawn from $\cD$ and labels i.i.d Rademacher variables is $\unclear$. On the other hand, $W_{0,b}$ is the function obtained when the refutation algorithm is run on example-label pairs from $\cD$. Finally, observe that 
\begin{equation}
\label{eq:randomizing-planting} \E_{b \sim \U_1} \E[ W_{i,b}(x)] = \E_{ (x,y) \sim \cD} \E[W_{i+1,y}(x)]
\end{equation}
Here, the inside expectation is over all the random choices within the procedure for computing $W_{i,b}$s above. We can now present our candidate weak learners.

\paragraph{Candidate Weak Learners} For every $0 \leq i \leq m+1$, let $h_i(x) = W_{i,+1}(x) - W_{i,-1}(x).$

\begin{proof}[Proof of Lemma \ref{lem:reftolearn}]
Our weak learning algorithm is given access to random labeled examples from a distribution $\cD'$ on $\R^n \otimes \{ \pm 1\}.$ The weak learner will draw a sample from $\cD'$ of size $O(\log{m})$ from $\cD'$ and chooses the $h_i$ that has the maximum correlation with the labels. Observe that with $O(\log{(m)})$ samples, the correlations of $h_i$ on $\cD'$ will be faithfully preserved with $2/3$ probability. Thus, to complete the proof, we only need to argue that one of the $h_i$s is always an $(\alpha,\gamma)$-weak learner.

 To show this, we must argue that there exists an $0\leq i \leq m+1$ such that:

\[
\E_{(x,y) \sim \cD'}[ y \cdot h_i(x) ] \geq \frac{2}{3m} \sup_{c \in \C} \E_{(x,y) \sim \cD'}[ c(x) \cdot y] - \frac{2}{3m} \delta.
\]

Observe that the guarantees of the weak learner are trivial if $\sup_{c \in \C} \E_{(x,y) \sim \cD'}[ y \cdot c(x)] < \delta.$ Thus assume that $\sup_{c \in \C} \E_{(x,y) \sim \cD'}[ y \cdot c(x)] > \delta.$ In this case, we will show that $\E_{(x,y) \sim \cD'}[h_i(x) y] \geq \frac{2}{3m}\geq \frac{2}{3m} \sup_{c \in \C} \E_{(x,y) \sim \cD'}[ c(x) \cdot y] - \frac{2}{3m}\delta.$

Now, observe that over the randomness of both the refutation algorithm and over the draw of i.i.d. sample from $\cD'$ of size $m = S(n)$, $\E[ W_{0,b} (x)] \geq 2/3$ and $\E[W_{m+1,b}(x)] \leq 1/3$ for any $b$. Thus, $$\sum_{i = 0}^m \E[ W_{i,y}(x) - W_{i+1,y}(x)] \geq 1/3,$$ where the expectation is over the randomness in the draw $(x,y) \sim \cD'$ and over the randomness in $W_{i,y}$ for $0 \leq i \leq m+1.$

Thus, there must exist an $i$ such that $\E[ W_{i,y}(x) - W_{i+1,y}(x)] > 1/3m.$ Observe that by construction 
\begin{multline} W_{i,y}(x) = \frac{y+1}{2} \cdot W_{i,1}(x) - \frac{y-1}{2} W_{i,-1}(x) = y \cdot \frac{W_{i,1}(x) - W_{i,-1}(x)}{2} + \frac{1}{2} (W_{i,1}(x) + W_{i,-1}(x)) \\= \frac{1}{2} y \cdot h_i(x)  + \frac{1}{2} (W_{i,1}(x) + W_{i,-1}(x)). \label{eq:whatever}\end{multline}

Next, observe that by \eqref{eq:randomizing-planting}, $\E[\frac{1}{2} (W_{i,1}(x) + W_{i,-1}(x))] = \E[ W_{i+1,y}(x)].$ 
Taking expectations on both sides of \eqref{eq:whatever} and rearranging, we have: $\E[ y \cdot h_i(x)] \geq \frac{2}{3m}.$ 

This establishes that for $\gamma = \frac{2}{3m}$ and $\alpha = \delta \cdot \gamma$ our algorithm returns $(\alpha, \gamma)$-weak agnostic learner as desired.
\end{proof}

We can now use boosting to get a strong agnostic learner for $\C$ over $\cD$ by using the weak learning algorithm along with a boosting algorithm. Specifically, we will use the result of Kalai and Kanade \cite{DBLP:conf/nips/KalaiK09}  (see also \cite{feldman2010distribution}) who showed the following agnostic boosting algorithm that takes a $( \gamma, \alpha )$-weak learner and outputs a hypothesis whose error is competitive within $\alpha$ with respect to the best fitting hypothesis from the class $\C$. 

\begin{fact}[Agnostic Boosting \cite{DBLP:conf/nips/KalaiK09}] \label{fact:boosting}
Let $\C$ be a class of Boolean concepts.
Let $\cD$ be a distribution on $\on^n$ and $\epsilon > 0.$

There's an algorithm that takes random labeled examples from a distribution $\cD'$ on example-label pairs $(x,y)$ such that the marginal on $x$ is $\cD$, invokes a $(\gamma,\alpha)$-weak learner for $\C$ $O(\frac{1}{\gamma^2 \epsilon^2})$ times and outputs a hypothesis $h: \on^n \rightarrow \on$ such that $$\E_{(x,y) \sim \cD'}\left[ \one{h(x)\ne y} \right] \leq \inf_{c \in \C} \E_{(x,y) \sim \cD'}\left[\one{c(x)\ne y}\right] + \alpha/\gamma + \epsilon.$$
The algorithm needs $S(n) \cdot O(\frac{1}{\gamma^2 \epsilon^2})$ samples and runs in time $T(n) \cdot O(\frac{1}{\gamma^2 \epsilon^2})$ where $S(n)$ and $T(n)$ are the sample complexity and the running time respectively of the $(\gamma, \alpha)$-weak agnostic learner.
\end{fact}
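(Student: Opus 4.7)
My plan is to adapt a gradient-descent-style boosting argument; the key (Kalai--Kanade) twist is that only the \emph{label} distribution is modified between rounds, so the $x$-marginal remains $\cD$ throughout and the weak learner's distribution-specific guarantee may be reapplied at every step.

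Fix a smooth convex surrogate loss $\phi \from \R \to \Rnn$ upper bounding the $0/1$ loss (for example a shifted logistic), and maintain a real-valued ensemble $F_t(x) = \eta \sum_{s = 1}^{t} h_s(x)$ for a step size $\eta > 0$ to be chosen. Let $\Phi_t = \Ex[(x,y) \sim \cD']{\phi(-y F_t(x))}$ and $\Phi^\star = \inf_{c \in \C} \Ex[(x,y) \sim \cD']{\phi(-y c(x))}$; note $\Phi^\star \leq \inf_{c \in \C} \err_{\cD'}(c)$. At round $t$, construct the relabeled distribution $\cD'_t$ by drawing $(x, y) \sim \cD'$ and outputting $(x, y \sigma_t)$ where $\sigma_t \in \on$ is a sign with $\E[\sigma_t \mid x, y] = \phi'(-y F_t(x)) / L$ for a normalizer $L \geq \sup_r \abs{\phi'(r)}$. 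The marginal of $\cD'_t$ on $x$ remains $\cD$ exactly, and a short calculation shows that for every $c \from \on^n \to \on$,
\[
\Ex[(x,y') \sim \cD'_t]{c(x) y'} \;=\; -\tfrac{1}{L} \cdot \frac{d}{d\tau}\bigg|_{\tau=0} \, \Ex[\cD']{\phi\bigl(-y (F_t(x) + \tau c(x))\bigr)} \mper
\]

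Next, run the weak learner on $\cD'_t$ to obtain $h_t$ with correlation at least $\gamma \cdot \sup_{c \in \C} \Ex[\cD'_t]{c(x) y'} - \alpha$. By the identity above, $h_t$ realizes at least a $\gamma$-fraction of the best achievable directional decrease of $\Phi_t$ along any $c \in \C$, up to additive slack $\alpha$. Combined with smoothness of $\phi$, this yields a per-round descent inequality of the form $\Phi_{t+1} \leq \Phi_t - \Omega(\eta \gamma / L)(\Phi_t - \Phi^\star) + O(\eta \alpha + \eta^2)$. Setting $\eta = \Theta(\epsilon \gamma)$ and iterating for $T = O(1/(\gamma^2 \epsilon^2))$ rounds drives $\Phi_T \leq \Phi^\star + O(\alpha / \gamma + \epsilon)$. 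Since $\phi$ upper bounds $0/1$ loss, a randomized rounding of $F_T$ (through a standard sigmoid bias) produces a Boolean hypothesis $h$ with $\err_{\cD'}(h) \leq \inf_{c \in \C} \err_{\cD'}(c) + \alpha / \gamma + \epsilon$, as required.

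The main obstacle is the label-only reweighting: classical AdaBoost modifies the $x$-distribution, which would break the distribution-specific weak learner supplied by \pref{lem:refutation-to-learning}. The fix is the sign perturbation $\sigma_t$, which simultaneously preserves the $x$-marginal and implements one step of functional gradient descent on $\Phi_t$. Sample and time accounting are then routine: each of the $T$ rounds uses one call to the weak learner on $O(S(n))$ fresh examples relabeled online with a single coin flip per example, giving the stated $O(S(n)/(\gamma^2 \epsilon^2))$ samples and $O(T(n)/(\gamma^2 \epsilon^2))$ time.
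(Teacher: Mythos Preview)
The paper does not prove this statement at all: it is recorded as a \emph{Fact} with a citation to Kalai--Kanade \cite{DBLP:conf/nips/KalaiK09} (see also Feldman \cite{feldman2010distribution}) and is used as a black box to derive \cref{lem:refutation-to-learning} from \cref{lem:reftolearn}. So there is nothing in the paper to compare your argument against line by line.

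That said, your sketch is a faithful outline of the Kalai--Kanade approach and captures the one point that actually matters for this paper's application: the boosting rounds modify only the \emph{label} distribution via a randomized sign $\sigma_t$ while leaving the $x$-marginal exactly $\cD$, so a distribution-specific weak learner (as produced in \cref{lem:reftolearn}) may be invoked at every round. The functional-gradient view with a smooth surrogate $\phi$, the identification of correlation under $\cD'_t$ with the directional derivative of $\Phi_t$, and the resulting descent inequality are the right ingredients; the parameter choices $\eta = \Theta(\epsilon\gamma)$ and $T = O(1/(\gamma^2\epsilon^2))$ yield the stated sample and time bounds. If you were to flesh this out, the places that need care are (i) making precise why the additive $\alpha$ slack in the weak learner translates to an additive $\alpha/\gamma$ in the final error (this comes out of balancing the $O(\eta\alpha)$ term against the contraction), and (ii) the rounding step from the real-valued $F_T$ to a Boolean hypothesis without losing more than a constant in the surrogate-to-$0/1$ comparison. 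Both are standard but not entirely one-liners.
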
{}

We get Lemma \ref{lem:refutation-to-learning} as an immediate corollary of Fact \ref{fact:boosting} and Lemma \ref{lem:reftolearn}.

\section*{Acknowledgements}
We thank Salil Vadhan for sharing an early version of \cite{vadhan2017learning} with us and illuminating follow-up discussions. P.K. thanks Avi Wigderson for many useful comments and suggestions about this work and David Steurer for helpful discussions on related problems. This research was supported by funding from Eric and Wendy Schmidt Fund for Strategic Innovation.
\addreferencesection
\bibliographystyle{amsalpha}{}
\bibliography{bib/mathreview,bib/dblp,bib/scholar,bib/custom,bib/roi}

\appendix

\end{document}